\newcommand{\clust}{\textsf{clust}}
\newcommand{\Paths}{\textsf{Paths}}
\newcommand{\leqs}{\leqslant}
\newcommand{\geqs}{\geqslant}
\newenvironment{eroman}{
\begin{enumerate}%
%
\def\theenumi{\rm\roman{enumi}}%
}{\end{enumerate}}
\newlength{\alginputlength}
\newcommand{\ignore}[1]{}
\newlength{\savefboxsep}
\newcommand{\st}{\,\mid\,}
\newcommand{\lw}{\longrightarrow}
\newcounter{saveenumi}
\newcommand{\startbreakenumi}
     {\vspace{3mm}
     \setcounter{saveenumi}{\theenumi}
      \end{enumerate}}
\newcommand{\stopbreakenumi}
     {\vspace{3mm}
     \begin{enumerate}
       \setcounter{enumi}{\thesaveenumi}}
\newcounter{saveenumii}
\newcommand{\startbreakenumii}
     {\setcounter{saveenumii}{\arabic{enumii}}
      \end{enumerate}}
\newcommand{\stopbreakenumii}
     {\begin{enumerate}
       \setcounter{enumii}{\thesaveenumii}}
\newtheorem{THM}{Theorem}[section]
\newtheorem{de}[THM]{Definition}
\newtheorem{exm}[THM]{Example}
\newenvironment{EX}{\begin{exm}\rm}{\hspace*{\fill}\raisebox{0.35em}{\framebox[0.4em]{\rule{0em}{0.12em}}}\end{exm}}
\newcommand{\calc}{{\cal C}}
\newlength{\vdl}\settowidth{\vdl}{\ensuremath{\vdash}}
\newcommand{\bdef}{\left\{ \begin{array}{ll}}
\newcommand{\fdef}{\end{array} \right.}
\font\bbd=msbm10      
\newcommand{\rr}{\hbox{\bbd R}}
\newcommand{\nn}{\hbox{\bbd N}}
\newcommand{\pp}{\hbox{\bbd P}}
\newlength{\vhx}\settowidth{\vhx}{\ensuremath{\rightharpoonup}}
\newcommand{\eud}{{\cal D}}
\newcommand{\euu}{{\cal U}}
\date{}
\colorlet{relation@colour}[rgb]{blue}
\colorlet{entity@colour}[rgb]{red}
\begin{document}

\title{Data ultrametricity and clusterability}

\author{Dan Simovici%
  \thanks{\texttt{Dan.Simovici@umb.edu}; Corresponding author}}
\affil{Department of Computer Science,\\ University of Massachusetts Boston, \\ Boston, USA}

\author{Kaixun Hua%
  \thanks{\texttt{kingsley@cs.umb.edu}}}
\affil{Department of Computer Science,\\ University of Massachusetts Boston, \\ Boston, USA}

\maketitle

\begin{abstract}
The increasing needs of clustering massive datasets and the high cost
of running clustering algorithms poses difficult problems for users.
In this context it is important to determine if a data set is
clusterable, that is, it may be partitioned efficiently into
well-differentiated groups containing similar objects.
We approach data clusterability from an ultrametric-based perspective.
A novel approach to determine the ultrametricity of a dataset is
proposed via a special type of matrix product, which allows us to
evaluate the clusterability of the dataset. Furthermore, we show that
by applying our technique to a dissimilarity space will generate the
sub-dominant ultrametric of the dissimilarity.
\end{abstract}

\section{Introduction}\label{sec:intro}
Clustering is the prototypical unsupervised learning activity which
consists in identifying cohesive and well-differentiated groups of
records in data. A data set is clusterable if such groups exist;
however, due to the variety in data distributions and the inadequate
formalization of certain basic notions of clustering, determining data
clusterability before applying specific clustering algorithms is a
difficult task.

Evaluating data clusterability before the application of clustering
algorithms can be very helpful because clustering algorithms are
expensive. However, many such evaluations are impractical because they
are NP-hard, as shown in~\cite{MASB}. Other
notions define data as clusterable when
the minimum between-cluster separation is
greater than the maximum intra-cluster
distance~\cite{ESK}, or when each element is
closer to all elements in its cluster than to all other
data~\cite{BBV}.

Several approaches exist in assessing data clusterability.
The main hypothesis of~\cite{AAB} is that clusterability can
be inferred from an one-dimensional view of pairwise distances between
objects.  Namely, clusterability is linked to the multimodality of the
histogram of inter-object dissimilarities.  The basic assumption is
that ``the presence of multiple modes in the set of pairwise
dissimilarities indicates that the original data is clusterable.''
Multimodality is evaluated using the {\em Dip} and {\em Silverman}
statistical multimodality tests, an approach that is computationally efficient.

Alternative approaches to data clusterability are linked to
the feasibility of producing a clustering;
a corollary of this assumption is that ``data that are hard to cluster
do not have a meaningful clustering
structure''~\cite{DANLS}.  Other approaches to
clusterability are identified based on clustering quality measures,
and on loss function optimization~\cite{MASB,MASB08,ASLO10,SBD,BBV,BHE}.

We propose a novel approach that relates data clusterability to the
extent to which the dissimilarity defined on the data set relate to a
special ultrametric defined on the set.

The paper is structured as follows. In Section~\ref{sec:disu} we
introduce dissimilarities and an ultrametrics that play a central
role in our definition of clusterability.  A special matrix
product on matrices with non-negative elements that allow an
efficient computation of the subdominant ultrametric is
introduced. In Section~\ref{sec:mc} a measure of clusterability
that is based on the iterative properties of the dissimilarity
matrix is defined. We provide experimental evidence on the
effectiveness of the proposed measure through several experiments
on small artificial data sets in Section~\ref{sec:ev}. Finally, we
present our conclusions and future plans in Section~\ref{sec:cfw}.

\section{Dissimilarities, Ultrametrics, and Matrices}\label{sec:disu}

A dissimilarity on a set $S$
is a mapping $d: S\times S \lw \rr$ such that
\begin{eroman}
\item
$d(x,y) \geqs 0$ and $d(x,y)=0$ if and only if $x=y$;
\item
$d(x,y) = d(y,x)$;
\end{eroman}

A dissimilarity on $S$ that satisfies the triangular inequality
\[
d(x,y) \leqs d(x,z) + d(z,y)
\]
for every $x,y,z\in S$ is a metric.  If, instead, the stronger inequality
\[
d(x,y) \leqs \max \{d(x,z),d(z,y)\}
\]
is satisfied, $d$ is said to be an {\em ultrametric} and the pair
$(S,d)$ is an {\em ultrametric space}.

A {\em closed sphere} in $(S,d)$ is a set $B[x,r]$
defined by
\[
B[x,r] = \{y \in S \st d(x,y) \leqs r\}.
\]
When $(S,d)$ is an ultrametric space two spheres having the same
radius $r$ in $(S,d)$ are either disjoint or coincide~\cite{DSCD}.
Therefore, the collection of closed spheres of radius $r$ in $S$,
$\calc_r = \{B[x,r]\st r\in S\}$ is a partition of $S$; we refer to
this partition as an {\em $r$-spheric clustering} of $(S,d)$.

In an ultrametric space $(S,d)$ every triangle is isosceles.  Indeed,
let $T = (x,y,z)$ be a triplet of points in $S$ and let $d(x,y)$ be the
least distance between the points of $T$.  Since
$d(x,z) \leqs \max \{d(x,y),d(y,z)\} = d(y,z)$ and
$d(y,z) \leqs \max \{d(y,x),d(x,z)\} = d(x,z)$, it follows that
$d(x,z) = d(y,z)$, so $T$ is isosceles; the two longest sides of this
triangle are equal.

It is interesting to note that every  $r$-spheric clustering in an
ultrametric space is a perfect clustering~\cite{ABBL}.  This means that
all of its in-cluster distances are smaller than all of its between-cluster
distances.  Indeed, if $x,y$ belong to the same cluster $B[u,r]$ then
$d(x,y) \leqs r$.  If $x \in B[u,r]$ and $y \in B[v,r]$, where $B[u,r]
\cap B[v,r] = \emptyset$, then $d(v,x) > r$, $d(y,v) \leqs r$ and this
implies $d(x,y) = d(x,v) > r$ because the triangle $(x,y,v)$ is isosceles and
$d(y,v)$ is not the longest side of this triangle.

\begin{EX}\label{exm:nov1118a}
Let $S = \{x_i \st 1\leqs i \leqs 8\}$ and let $(S,d)$ be the
ultrametric space, where the ultrametric $d$ is defined by the
following table:
\[
\begin{array}{|c||cccccccc|}\hline
d(x_i,x_j) & x_1 & x_2 & x_3 & x_4 & x_5 & x_6 & x_7 & x_8\\ \hline
x_1 & 0& 4& 4& 10& 10& 16& 16& 16\\
x_2 & 4& 0& 4& 10& 10& 16& 16& 16\\
x_3 & 4& 4& 0& 10& 10& 16& 16& 16\\
x_4 & 10& 10& 10& 0& 6& 16& 16& 16\\
x_5 & 10& 10& 10& 6& 0& 16& 16& 16\\
x_6 & 16& 16& 16& 16& 16& 0& 4& 4\\
x_7 & 16& 16& 16& 16& 16& 4& 0& 4\\
x_8 & 16& 16& 16& 16& 16& 4& 4& 0\\ \hline
\end{array}
\]
The closed spheres of this spaces are:
\begin{eqnarray*}
B[x_i,r] &=& \begin{cases}
           \{x_i\} & \mbox{ for } r<4,\\
           \{x_1,x_2,x_3\} & \mbox{ for } 4 \leqs r < 10,\\
           \{x_1,x_2,x_3,x_4,x_5\} & \mbox{ for } 10 \leqs r < 16,\\
           S & \mbox{ for } r = 16,
           \end{cases}\\
         & & \mbox{ for }1\leqs i \leqs 3,\\
B[x_i,r] &=& \begin{cases}
             \{x_i\} & \mbox{ for } r < 6,\\
             \{x_4,x_6\} & \mbox{ for } 6 \leqs r < 16,\\
             S & \mbox{ for } r=16,
             \end{cases}\\
         & & \mbox{for }4\leqs i \leqs 5,\\
B[x_i,r] &=& \begin{cases}
             \{x_i\} & \mbox{ for }r < 4,\\
             \{x_6,x_7,x_8\} & \mbox{ for } 4\leqs r < 16,\\
             S & \mbox{ for } r = 16,
             \end{cases}\\
         & & \mbox{ for }6\leqs i \leqs 8.
\end{eqnarray*}
\end{EX}
Based on the properties of spheric clusterings mentioned above
meaningful such clusterings can be produced in linear time in the
number of objects.  For the ultrametric space mentioned in
Example~\ref{exm:nov1118a}, the closed spheres of radius $6$ produce
the clustering
\[
\{x_1,x_2,x_3\},\{x_4,x_5,\},\{x_6,x_7,x_8\}.
\]
If a dissimilarity defined on a data set
is close to an ultrametric it is
natural to assume that the data set is clusterable.  We assess the
closeness between a dissimilarity $d$ and a special ultrametric known as
the {\em subdominant ultrametric} of $d$ using a matrix approach.

Let $S$ be a set. Define a partial order ``$\leqs$'' on the set of
definite dissimilarities $\eud_S$ by $d \leqs d'$ if $d(x,y)\leqs d'(x,y)$
for every $x,y\in S$.  It is easy to
verify that $(\eud_S,\leqs)$ is a poset.

The set $\euu_S$ of ultrametrics on $S$ is a subset of $\eud_S$.
\begin{THM}\label{thm:apr2818z}
Let $\{d_i \in \euu_S \st i\in I\}$ be a collection of ultrametrics
on the set $S$.  Then, the mapping $d:S \times S \lw \rr_{\geqs 0}$
defined as
\[
d(x,y) = \sup \{d_i(x,y) \st i\in I\}
\]
is an ultrametric on $S$.
\end{THM}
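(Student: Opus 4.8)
The plan is to verify directly that $d$ satisfies the three defining conditions of a dissimilarity together with the strong triangle inequality, reducing each to the corresponding property of the individual $d_i$. The only point requiring a word of care is that $d$ is genuinely $\rr_{\geqs 0}$-valued: for the definition to make sense we need $\sup\{d_i(x,y)\st i\in I\}<\infty$ for every pair $x,y$, i.e. the family $\{d_i\st i\in I\}$ is pointwise bounded. I would note that this is implicit in the statement and is automatic in the intended application, where every $d_i$ is dominated by a fixed dissimilarity on $S$; assuming it, non-negativity of $d$ is immediate from $d_i(x,y)\geqs 0$.

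Next I would dispatch the coincidence axiom and symmetry. If $x=y$ then $d_i(x,y)=0$ for all $i$, so $d(x,y)=0$; conversely, if $d(x,y)=0$ then, since each term is $\geqs 0$, we must have $d_i(x,y)=0$ for every $i\in I$, and choosing any index $i$ (the set $I$ being nonempty) the fact that $d_i$ is a definite dissimilarity forces $x=y$. Symmetry is just $d(x,y)=\sup_{i\in I} d_i(x,y)=\sup_{i\in I} d_i(y,x)=d(y,x)$.

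The substantive — though still short — step is the strong triangle inequality. I would fix $x,y,z\in S$ and argue that $\max\{d(x,z),d(z,y)\}$ is an upper bound for the set $\{d_i(x,y)\st i\in I\}$: for each $i$, using first that $d_i$ is an ultrametric and then enlarging each term to its supremum over the family,
\[
d_i(x,y)\leqs \max\{d_i(x,z),d_i(z,y)\}\leqs \max\{d(x,z),d(z,y)\}.
\]
Taking the supremum over $i\in I$ on the left then gives $d(x,y)\leqs \max\{d(x,z),d(z,y)\}$, which completes the proof.

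I do not expect any real obstacle here: the argument is a routine ``sup commutes with these inequalities'' verification, and the one genuine observation is that the maximum of two suprema dominates every individual maximum, hence dominates the supremum of those maxima. The only subtlety worth flagging explicitly in the write-up is the finiteness/well-definedness of the pointwise suprema, which I would state as a standing hypothesis (or derive from a uniform bound) rather than prove.
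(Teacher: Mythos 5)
Your proof is correct, and the key step --- bounding $d_i(x,y)\leqs\max\{d_i(x,z),d_i(z,y)\}\leqs\max\{d(x,z),d(z,y)\}$ for each $i$ and then passing to the supremum --- is exactly the argument the paper gives. You are somewhat more thorough than the paper (which checks only the ultrametric inequality), and your remark about the pointwise finiteness of the suprema is a legitimate observation, but there is no substantive difference in approach.
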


\begin{proof}
We need to verify only that $d(x,y)$ satisfies the ultrametric
inequality $d(x,y) \leqs \max \{d(x,z),d(z,y)\}$ for $x,y,z \in S$.
Since each mapping $d_i$ is an ultrametric,  for $x,y,z \in S$ we have
\begin{eqnarray*}
d_i(x,y) &\leqs& \max \{d_i(x,z),d_i(z,y)\} \\
         &\leqs& \max \{d(x,z),d(z,y)\}
\end{eqnarray*}
for every $i \in I$.  Therefore,
\begin{eqnarray*}
d(x,y) &=& \sup \{d_i(x,y) \st i\in I\}\\
       &\leqs & \max \{d(x,z),d(z,y)\},
\end{eqnarray*}
hence $d$ is  an ultrametric on $S$.
\end{proof}

\begin{THM}\label{thm:dec1504a}
Let $d$ be a dissimilarity on a set $S$ and let $U_d$ be the set of
ultrametrics $U_d = \{e\in \euu_S \st e\leqs d\}$.  The set $U_d$ has
a largest element in the poset $(\euu_S,\leqs)$.
\end{THM}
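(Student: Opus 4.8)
The plan is to exhibit the largest element of $U_d$ explicitly as a supremum and then invoke Theorem~\ref{thm:apr2818z}. First I would observe that $U_d$ is nonempty: the constant dissimilarity $e_0$ defined by $e_0(x,y) = 0$ if $x = y$ and $e_0(x,y) = \min\{d(u,v) \st u \neq v\}$ otherwise is an ultrametric (any ultrametric inequality among two equal values and one smaller-or-equal value holds trivially) and satisfies $e_0 \leqs d$. Hence the collection $\{e \in \euu_S \st e \leqs d\}$ is a nonempty family of ultrametrics, and moreover it is bounded above by $d$, so the pointwise supremum
\[
\hat d(x,y) = \sup \{e(x,y) \st e \in U_d\}
\]
is a well-defined map $S \times S \lw \rr_{\geqs 0}$ with finite values (each value is $\leqs d(x,y)$).

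Next I would apply Theorem~\ref{thm:apr2818z} to the index set $I = U_d$ with $d_i = i$: it yields that $\hat d$ is an ultrametric on $S$. It remains to check the two defining conditions of a dissimilarity so that $\hat d \in \euu_S$. Symmetry is immediate since each $e \in U_d$ is symmetric. For definiteness, $\hat d(x,x) = 0$ because every $e$ satisfies $e(x,x) = 0$; and for $x \neq y$ we have $\hat d(x,y) \geqs e_0(x,y) = \min\{d(u,v) \st u \neq v\} > 0$ by condition (i) in the definition of a dissimilarity. Thus $\hat d \in \euu_S$. Since by construction $\hat d(x,y) \leqs d(x,y)$ for all $x,y$, we get $\hat d \in U_d$, and since $\hat d$ dominates every $e \in U_d$ pointwise, $\hat d$ is the largest element of $(U_d, \leqs)$.

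The only genuinely delicate point is making sure the supremum is taken over a nonempty set and stays finite — i.e., that $U_d$ contains something and that the pointwise sup does not blow up; both are handled by the bounding dissimilarities $e_0 \leqs \hat d \leqs d$. Everything else (the ultrametric inequality for $\hat d$, symmetry, definiteness) either follows directly from Theorem~\ref{thm:apr2818z} or is a one-line verification. In particular, no separate argument about "combining finitely many ultrametrics" is needed, since Theorem~\ref{thm:apr2818z} already covers an arbitrary index set $I$.
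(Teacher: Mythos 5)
The proposal is correct and follows essentially the same route as the paper: both take the candidate largest element to be the pointwise supremum $\sup\{e(x,y)\st e\in U_d\}$, note it is bounded above by $d$, and verify it is an ultrametric dominating every member of $U_d$. The only differences are organizational and, if anything, in your favor: you invoke Theorem~\ref{thm:apr2818z} where the paper re-proves the ultrametric inequality by an equivalent contradiction argument, and your nonemptiness witness (the constant $\min\{d(u,v)\st u\neq v\}$ off the diagonal, which exists when $S$ is finite) is a \emph{definite} dissimilarity, whereas the paper's witness $d_0\equiv 0$ is not definite and so, under the paper's own definition, does not actually belong to $\euu_S$.
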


\begin{proof}
The set $U_d$ is nonempty because the zero dissimilarity
$d_0$ given by $d_0(x,y) = 0$ for every $x,y\in S$ is an ultrametric
and $d_0 \leqs d$.

Since the set $\{e(x,y) \st e\in U_d\}$ has $d(x,y)$ as an upper
bound, it is possible to define the mapping $e_1 :S^2 \lw \rr_{\geq
0}$ as $e_1(x,y) = \sup \{e(x,y)\st e \in U_d\}$
for $x,y\in S$.  It is clear that $e\leqs e_1$ for every ultrametric
$e$.  We claim that $e_1$ is an ultrametric on $S$.

We prove only that $e_1$ satisfies the ultrametric inequality.
Suppose that there exist $x,y,z\in S$ such that $e_1$ violates the
ultrametric inequality; that is,
\[
\max\{e_1(x,z),e_1(z,y)\} < e_1(x,y).
\]
This is equivalent to
\begin{eqnarray*}
\lefteqn{\sup \{e(x,y)\st e\in U_d\}}\\
&>& \max \{ \sup\{e(x,z)\st e\in U_d\},\\
& &  \sup\{e(z,y)\st e\in U_d\}\}.
\end{eqnarray*}
Thus, there exists $\hat{e}\in U_d$ such that
\[
\hat{e}(x,y)  >  \sup\{e(x,z)\st e\in U_d\}
\]
and
\[
\hat{e}(x,y)  >  \sup\{e(z,y)\st e\in U_d\}.
\]

In particular, $\hat{e}(x,y) > \hat{e}(x,z)$ and $\hat{e}(x,y) >
\hat{e}(z,y)$, which contradicts the fact that $\hat{e}$ is an
ultrametric.
\end{proof}

The ultrametric defined by Theorem~\ref{thm:dec1504a} is known as
the {\em maximal subdominant ultrametric for the dissimilarity
$d$}.\index{maximal subdominant ultrametric for a dissimilarity}

The situation is not symmetric with respect to the infimum of a set
of ultrametrics because, in general, the infimum of a set of
ultrametrics is not necessarily an ultrametric.

Let $\pp$ be the set
\[
\pp = \{x \st x\in \rr, x\geqs 0\} \cup \{\infty\}.
\]
The usual operations defined on $\rr$ can be extended to $\pp$ by
defining
\[
x + \infty = \infty + x = \infty, x\cdot \infty = \infty \cdot x = \infty
\]
for $x \geqs 0$.

Let $\pp^{m\times n}$ be the set of $m \times n$ matrices over $\pp$.
If $A,B \in \pp^{m\times n}$ we have $A \leqs B$ if $a_{ij} \leqs
b_{ij}$ that is, if $a_{ij} \geqs b_{ij}$ for $1 \leqs i \leqs m$ and
$1\leqs j \leqs n$.

If $A\in \pp^{m\times n}$ and $B \in \pp^{n \times p}$ the
matrix product $C = AB \in \pp^{m \times p}$ is defined as:
\[
c_{ij} = \min \{\max \{a_{ik},b_{kj}\}\st 1 \leqs k \leqs n\},
\]
for $1 \leqs i \leqs m$ and $1\leqs j \leqs p$.

If $E_n \in \pp^{n\times n}$ is the matrix defined by
\[
(E_n)_{ij} = \begin{cases}
             0 & \mbox{if } i=j,\\
             \infty & \mbox{otherwise},
             \end{cases}
\]
that is the matrix whose main diagonal elements are $0$ and the other
elements equal $\infty$, then $A E_n = A$ for every $A \in
\pp^{m \times n}$ and $E_n A = A$ for every $A\in \pp^{n\times p}$.

The matrix multiplication defined above is associative, hence
$\pp^{n\times n}$ is a semigroup with the identity $E_n$.  The powers of
$A$ are inductively defined as
\begin{eqnarray*}
A^0 &=& E_n,\\
A^{n+1} &=& A^n A,
\end{eqnarray*}
for $n \in \nn$.

For $A,B \in \pp^{m \times n}$ we define $A \leqs B$ as $a_{ij}
\leqs B_{ij}$ for $1\leqs i \leqs m$ and $1 \leqs j \leqs n$.
Note that if $A \in \pp^{n\times n}$, then $A \leqs E_n$.  It is
immediate that for $A,B \in \pp^{m \times n}$ and $C \in \pp^{n
\times p}$, then $A\leqs B$ implies $AC \leqs BC$; similarly, if
$C \in \pp^{p\times m}$ and $CA \leqs CB$.

Let $L(A)$ be the finite set of elements in $\pp$ that occur in the
matrix $A \in \pp^{n\times n}$.  Since he entries of any power $A^n$
of $A$ are also included in $L(A)$, the sequence
$A,A^2,\ldots,A^n,\ldots$ is ultimately periodic because it contains a
finite number of distinct matrices.

Let $k(A)$ be the least integer $k$ such that $A^k = A^{k+d}$ for some
$d > 0$.  The sequence of powers of $A$ has the form
\[
\begin{array}{l}
A,A^2,\ldots,A^{k(A)-1},A^{k(A)},\ldots,\\
\quad A^{k(A)+d-1}, A^{k(A)},\ldots,A^{k(A)+d-1},\ldots,
\end{array}
\]
where $d$ is the least integer such that $A^{k(A)} = A^{k(A)+d}$.
This integer is denoted by $d(A)$.

The set $\{A^{k(A)},\ldots,A^{k(A)+d-1}\}$
is a cyclic group with respect to the multiplication.

If $(S,d)$ is a dissimilarity space, where $S = \{x_1,\ldots,x_n\}$,
the matrix of this space is the matrix $A \in \pp^{n\times n}$
defined by $a_{ij} = d(x_i,x_j)$ for $1 \leqs i,j \leqs n$.  Clearly,
$A$ is a symmetric matrix and all its diagonal elements are $0$,
that is, $A \leqs E_n$.

If, in addition, we have $a_{ij} \leqs a_{ik} + a_{kj}$ for $1 \leqs
i,j,k \leqs n$, then $A$ is a {\em metric matrix}.  If this condition
is replaced by the stronger condition $a_{ij} \leqs \max \{a_{ik} +
a_{kj}\}$ for $1 \leqs i,j,k \leqs n$, then $A$ is {\em ultrametric
matrix}.  Thus, for an ultrametric matrix we have $a_{ij} \leqs \min
\{\max \{a_{ik} + a_{kj}\} \st 1\leqs k \leqs n\}$. This amounts to
$A^2 \leqs A$.

\ignore{
\begin{EX}
Let
\[
A = \begin{pmatrix}
    0 & a & b \\
    a & 0 & c\\
    b & c & 0
    \end{pmatrix}
\]
be a symmetric $3\times 3$ matrix with non-negative entries.  It is
easy to see that
\[
A^2 = \begin{pmatrix}
      0 & \min\{a,\max \{b,c\}\} & \min \{b,\max\{a,c\}\}\\
      \min\{a,\max \{b,c\}\}  & 0 & \min \{c,\max\{a,b\}\}\\
      \min \{b,\max\{a,c\}\}  & \min \{c,\max\{a,b\}\} & 0
      \end{pmatrix}.
\]
Note that $A$ is an ultrametric matrix, (say $a \leqs
\max \{b,c\}$) if and only if $A = A^2$.
\end{EX}
}

\begin{THM}\label{thm:may0618z}
If $A \in \pp^{n\times n}$ is a dissimilarity matrix there exists $m\in \nn$ such that
\[
\cdots = A^{m+1} = A^m \leqs \cdots \leqs A^2 \leqs A \leqs  E_n
\]
and $A^m$ is an ultrametric matrix.
\end{THM}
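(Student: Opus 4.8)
The plan is to derive the whole statement from two facts already in hand: a dissimilarity matrix lies below the identity $E_n$, and the matrix product is monotone in each argument. The single load-bearing observation is that $A^2\leqs A$. Indeed, since $A$ is a dissimilarity matrix we have $a_{ii}=0=(E_n)_{ii}$ and $a_{ij}\leqs\infty=(E_n)_{ij}$ for $i\neq j$, so $A\leqs E_n$ (as already noted in the text); right-multiplying this inequality by $A$ and using monotonicity gives $A^2=A\cdot A\leqs E_n\cdot A=A$. Equivalently, taking $k=i$ in the definition of the product, $(A^2)_{ij}=\min_k\max\{a_{ik},a_{kj}\}\leqs\max\{a_{ii},a_{ij}\}=a_{ij}$.

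Next I would propagate this downward: left-multiplying $A^2\leqs A$ by $A^{k-1}$ and invoking monotonicity again yields $A^{k+1}\leqs A^k$ for every $k\geqs1$, which together with $A\leqs E_n$ gives the non-increasing chain $\cdots\leqs A^2\leqs A\leqs E_n$ required by the statement.

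To obtain stabilization, recall that the order on $\pp^{n\times n}$ is the entrywise order induced by the total order on $\pp$, and that $L(A)$ --- which contains every entry of every power $A^k$ --- is finite. Hence each of the $n^2$ scalar sequences $\bigl((A^k)_{ij}\bigr)_{k\geqs1}$ is non-increasing inside a finite linearly ordered set, so it is eventually constant; taking $m$ to be the largest of these finitely many stabilization indices gives $A^m=A^{m+1}=A^{m+2}=\cdots$. (Alternatively, the ultimate periodicity already established forces the period $d(A)$ to be $1$, since over a full period the power sequence is simultaneously non-increasing and returns to its start, hence constant there; then $m=k(A)$ works.) Finally, because every power of $A$ with exponent $\geqs m$ equals $A^m$, we have $(A^m)^2=A^{2m}=A^m$, so in particular $(A^m)^2\leqs A^m$, which is exactly the inequality characterizing ultrametric matrices recalled above; a routine induction on the exponent also shows each $A^k$ is symmetric with zero diagonal, so $A^m$ is genuinely the matrix of an ultrametric space.

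The only real difficulty is spotting the first step: once $A^2\leqs A$ is available, everything reduces to bookkeeping with the monotonicity, associativity, and finiteness of $L(A)$ already set up in the preceding text.
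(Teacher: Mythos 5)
Your proof is correct and follows essentially the same route as the paper's: both rest on $A\leqs E_n$, the monotonicity of the min--max product giving the decreasing chain of powers, finiteness of $L(A)$ forcing stabilization at some $m$, and the identity $A^{2m}=A^m$ to conclude that $A^m$ is ultrametric. You have merely spelled out the steps (in particular $A^2\leqs A$ and the entrywise stabilization argument) that the paper's proof compresses into ``immediate.''
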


\begin{proof}
Since $A \leqs E_n$, the existence of the number $m$ with the property mentioned
in the theorem is immediate since there exists only a finite number
of $n \times n$ matrices whose elements belong to $L(A)$.
Since $A^m = A^{2m}$, it follows that $A^m$ is an ultrametric matrix.
\end{proof}

For a matrix $A \in \pp^{n\times n}$ let $m(A)$ be the least number
$m$ such that $A^m = A^{m+1}$.  We refer to $m(A)$ as the
{\em stabilization power} of the matrix $A$.  The matrix
$A^{m(A)}$ is denoted by $A^*$.

The previous considerations suggest defining the {\em ultrametricity}
of a matrix $A \in \pp^{n\times n}$ with $A \leqs E_n$ as $u(A) =
\frac{n}{m(A)}$.  Since $m(A) \leqs n$, it follows that $u(A) \geqs
1$.  If $m(A) = 1$, $A$ is ultrametric itself and $u(A) = n$.

\begin{THM}\label{thm:may0618y}
Let $(S,d)$ be a dissimilarity space, where $S = \{x_1,\ldots,x_n\}$
having the dissimilarity matrix $A \in \pp^{n\times n}$.  If $m$ is the least
number such that $A^m = A^{m+1}$, then the mapping $\delta : S \times
S \lw \pp$ defined by $\delta(x_i,x_j) = (A^m)_{ij}$ is the
subdominant ultrametric for the dissimilarity $d$.
\end{THM}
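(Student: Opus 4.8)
The plan is to show that $\delta$ coincides with the maximal subdominant ultrametric of $d$ whose existence is guaranteed by Theorem~\ref{thm:dec1504a}. Write $A^{*}=A^{m}$ for the stabilized power, so that $A^{m}=A^{m+1}$ and hence $A^{m}=A^{m+k}$ for every $k\geqs 0$, and let $U_{d}=\{e\in\euu_S \st e\leqs d\}$. Since Theorem~\ref{thm:dec1504a} tells us that $U_{d}$ has a largest element and a largest element of a poset is unique, it suffices to establish two facts: (a) $\delta\in U_{d}$, and (b) $e\leqs\delta$ for every $e\in U_{d}$. Together they identify $\delta$ with that largest element, which is by definition the (maximal) subdominant ultrametric of $d$.

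For (a) I would verify that $\delta$ is a definite dissimilarity, that it satisfies the ultrametric inequality, and that $\delta\leqs d$. Symmetry of $\delta$ is inherited from $A$, because the min--max product sends symmetric matrices to symmetric ones, so every power $A^{k}$, in particular $A^{m}$, is symmetric. From the chain $A^{m}\leqs\cdots\leqs A^{2}\leqs A\leqs E_{n}$ of Theorem~\ref{thm:may0618z} we read off $(A^{m})_{ii}\leqs a_{ii}=0$, so the diagonal of $A^{m}$ vanishes, and also $(A^{m})_{ij}\leqs a_{ij}$ for all $i,j$, i.e.\ $\delta\leqs d$. Definiteness, namely $(A^{m})_{ij}>0$ for $i\neq j$, follows by a short induction on the exponent, using that $a_{ii}=0$ while $a_{ij}>0$ for $i\neq j$. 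Finally, $A^{m}$ is an ultrametric matrix: from $A^{m}=A^{2m}$ we get $(A^{m})^{2}=A^{m}$, which is exactly the condition $(A^{m})_{ij}\leqs\max\{(A^{m})_{ik},(A^{m})_{kj}\}$ for all $i,j,k$. Hence $\delta$ is an ultrametric with $\delta\leqs d$, so $\delta\in U_{d}$.

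For (b), let $e\in U_{d}$ and let $B\in\pp^{n\times n}$ be its matrix, so $B\leqs A$ entrywise because $e\leqs d$. Since $e$ is an ultrametric, its ultrametric inequality reads $B\leqs B^{2}$, while $B^{2}\leqs B$ holds for any dissimilarity matrix (take $k=i$ in the definition of the product and use $b_{ii}=0$); hence $B^{2}=B$ and therefore $B^{k}=B$ for every $k\geqs 1$. Now I invoke monotonicity of the product, $X\leqs Y\Rightarrow XC\leqs YC$ and $CX\leqs CY$: iterating it gives $B^{k}\leqs A^{k}$ for all $k$, and with $k=m$ this says $B=B^{m}\leqs A^{m}$, i.e.\ $e\leqs\delta$. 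Combining (a) and (b), $\delta$ is the largest element of $U_{d}$, hence the maximal subdominant ultrametric of $d$, which proves the statement.

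I do not anticipate a genuine obstacle here; this is the min--max ``transitive closure'' computation in disguise. The points that need care are bookkeeping: applying monotonicity of the product in the correct direction, and using the characterization of ultrametric matrices as the idempotents $B=B^{2}$ of this product, which is what lets step (b) collapse all powers of $B$ to $B$ itself. A minor but necessary extra point, easily overlooked, is confirming that $\delta$ is a \emph{definite} dissimilarity, i.e.\ that $(A^{m})_{ij}>0$ off the diagonal; the induction mentioned in step (a) supplies this.
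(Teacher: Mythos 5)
Your proof is correct and takes essentially the same route as the paper's: $A^m$ is an ultrametric matrix with $A^m\leqs A$, and any ultrametric $e\leqs d$ has an idempotent matrix $B=B^2$, so $B=B^m\leqs A^m$ by monotonicity of the min--max product. You merely supply more detail than the paper (symmetry, definiteness, and the correctly oriented inequalities in the maximality step, which the paper's own text states with the comparison reversed), but the underlying argument is identical.
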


\begin{proof}
As we observed, $A^m$ is an ultrametric matrix, so $\delta$ is an
ultrametric on $S$.  Since $A^m \leqs A$, it follows that $d(x_i,x_j)
\geqs \delta(x_i,x_j)$ for all $x_i,x_j \in S$.

Suppose that $C \in \pp^{n\times  n}$ is an ultrametric matrix such
that $A \leqs
C$, which implies $A^m \leqs C^m \leqs C$.  Thus, $A^m$ dominates any
ultrametric that is dominated by $d$.  Consequently, the dissimilarity
defined by $A^m$ is the subdominant ultrametric for $d$.
\end{proof}

The subdominant ultrametric of a dissimilarity is usually studied in
the framework of weighted graphs~\cite{LECL}.

A {\em weighted graph} is a triple $(V,E,w)$, where $V$ is the set of vertices
of $G$, $E$ is a set of two-element subsets of $V$ called edges.
and $w: E \lw \pp$ is the weight of the edges.  If $e\in E$,
then $e = \{u,v\}$, where $u,v$ are distinct vertices in $V$.
The weight is extended to all 2-elements subsets of $V$ as
\[
w(\{v_i,v_j\}) = \begin{cases}
                   w(\{v_i,v_j\}) & \mbox{if } \{v_i,v_j\} \in E,\\
                   \infty & \mbox{otherwise}.
                   \end{cases}
\]
A {\em path of length $n$} in a weighted graph is a sequence
\[
\wp = (v_0,v_1,,v_2,\ldots,v_{n-1},v_n),
\]
where $\{v_i,v_{i+1}\}\in E$ for $0 \leqs n \leqs n-1$.

The set of paths of length $n$ in the graph $G$ is denoted as
$\Paths^n(G)$.  The set of paths of length $n$ that join the vertex
$v_i$ to the vertex $v_j$ is denoted by $\Paths^{n}_{ij}$.
The set of all paths is
\[
\Paths(G) = \bigcup_{n\geqs 1} \Paths^n(G).
\]

For a weighted graph $G = (V,E,w)$, the extension of the weight
function $w$ to $\Paths^n(G)$ is the function $M : \Paths(G) \lw \pp$
defined as
\[
M(\wp) = \max \{w(v_{i-1},v_i)\st 1 \leqs i \leqs n\},
\]
where $\wp = (v_0,v_1,\ldots,v_n)$.  
Thus, if $\wp' = \wp e$, we have $M(\wp') = \max \{M(\wp),w(e)\}$.

If $G = (V,E,w)$ is a weighted graph, its {\em incidence matrix} is
the matrix $A_G \in \pp^{n \times n}$, where $n = |V|$, defined by
$(A_G)_{ij} = w(v_i,v_j)$ for $1\leqs i,j \leqs n$.

Let $P_{ij}^{(\ell)}$ be the set of paths of length $\ell$ that
join the vertex $v_i$ to the vertex $v_j$. Note that
\begin{eqnarray*}
P_{ij}^{(\ell+1)} &=& \{(v_i,\ldots,v_k,v_j) \st \\
                  & & \wp = (v_i,\ldots,v_k)\in P_{ik}^{(\ell)} \mbox{
		  and } \\
                  & & v_j \mbox{ does not occur in } \wp\}.
\end{eqnarray*}

Define $a_{ij}^{(\ell)} = \min \{M(\wp) \st \wp \in P_{ij}^{(\ell)} \}$.
The powers of the incidence matrix of the graph are given by
\begin{eqnarray*}
a_{ik}^{(\ell+1)} &=& \min \{M(\wp') \st \wp' \in P_{ik}^{(\ell+1)} \}\\
                  &=& \min \{\max \{M(\wp),w(e)\} \st \\
                  & & \wp' = (v_i,\ldots,v_j,v_k) \mbox{ and } \\
                  & & \wp \in P_{ij}^{(\ell)}, e = (v_j,v_k)\in E\}\\
                  &=& \min_{j} \{\max \{a_{ij}^{\ell},w(e)\} \st e = (v_j,v_k)\}.
\end{eqnarray*}
Thus, we have
\[
(A_{G}^{\ell})_{ij} = \min \{M(\wp) \st \wp \in P_{ij}^{\ell}\}
\]
for $1 \leqs i, j \leqs n$.

\section{A Measure of Clusterability}\label{sec:mc}
We conjecture that a dissimilarity space $(D,d)$ is more clusterable
if the dissimilarity is closer to an ultrametric, hence if $m(A_D)$ is
small.  Thus, it is natural to define the {\em clusterability of a
data set} $D$ as the number $\clust(D) = \frac{n}{m(A_D)}$ where $n =
|D|$, $A_D$ is the dissimilarity matrix of $D$ and $m(A_D)$ is the
stabilization power of $A_D$.  The lower the stabilization power, the
closer $A$ is to an ultrametric matrix, and thus, the higher the
clusterability of the data set.

\begin{table*}[ht]
\caption{\label{tbl:nov2718a}
All clusterable datasets have values greater than 5 for their
clusterability; all non-clusterable datasets have values no larger
than $5$.}
\centering
\begin{tabular}{|l|l|l|l|l|l|}\hline
Dataset                & n    &  Dip   &   Silv. & $m(A_D)$ & $\clust(D)$ \\ \hline
iris            & 150  & 0.0000 &  0.0000 & 14     & 10.7\\
swiss           & 47   & 0.0000 &  0.0000 & 6      & 7.8\\
faithful        & 272  & 0.0000 &  0.0000 & 31     & 8.7\\
rivers          & 141  & 0.2772 &  0.0000 & 22     & 6.4\\
trees           & 31   & 0.3460 &  0.3235 & 7      & 4.4\\
USAJudgeRatings & 43   & 0.9938 &  0.7451 & 10     & 4.3\\
USArrests       & 50   & 0.9394 &  0.1897 & 15     & 3.3\\
attitude        & 30   & 0.9040 &  0.9449 & 6      & 5\\
cars            & 50   & 0.6604 &  0.9931 & 15     & 3.3\\ \hline
\end{tabular}
\end{table*}

Our hypothesis is supported by previous results obtained
in~\cite{AAB}, where the clusterability of 9 databases were
statistically examined using the Dip and Silverman tests of
unimodality.  The approach used in~\cite{AAB} starts with the
hypothesis that the presence of multiple modes in the uni-dimensional
set of pairwise distances indicates that the original data set is
clusterable.  Multimodality is assessed using the tests mentioned
above.  The time required by this evaluation is quadratic in the
number of objects.

The first four data sets, {\em iris}, {\em swiss}, {\em faithful} and
{\em rivers} were deemed to be clusterable; the last five were
evaluated as not clusterable.  Tests published in~\cite{AABR} have
produced low $p$-values for the first four datasets, which is an
indication of clusterability.  The last five data sets, {\em
USArrests}, {\em attitude}, {\em cars}, and {\em trees} produce much
larger $p$-values, which show a lack of clusterability.
Table~\ref{tbl:nov2718a} shows that all data sets deemed clusterable
by the unimodality statistical test have values of the clusterability
index that exceed $5$.

In our approach clusterability of a data set $D$ is expressed
primarily through the ``stabilization power'' $m(A_D)$ of the
dissimilarity matrix $A_D$; in addition, the histogram of the
dissimilarity values is less differentiated when the data is not
clusterable.

\section{Experimental Evidence on Small Artificial Data
Sets}\label{sec:ev} Another series of experiments involved a
series of small datasets having the same number of points in
$\rr^2$ arranged in lattices.  The points have integer coordinates
and the distance between points is the Manhattan distance.

\begin{figure*}
\centering
\begin{tabular}{cc}
\multicolumn{2}{c}{
\includegraphics[width=0.27\textwidth]{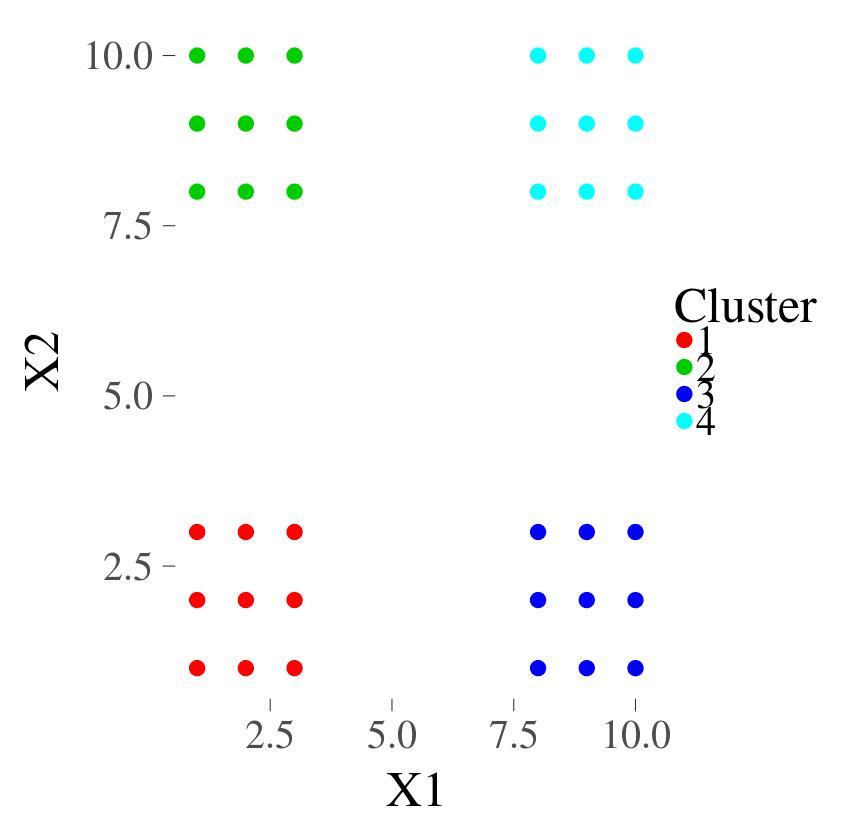}}\\
\multicolumn{2}{c}{Original dataset}\\
\includegraphics[width=0.27\textwidth]{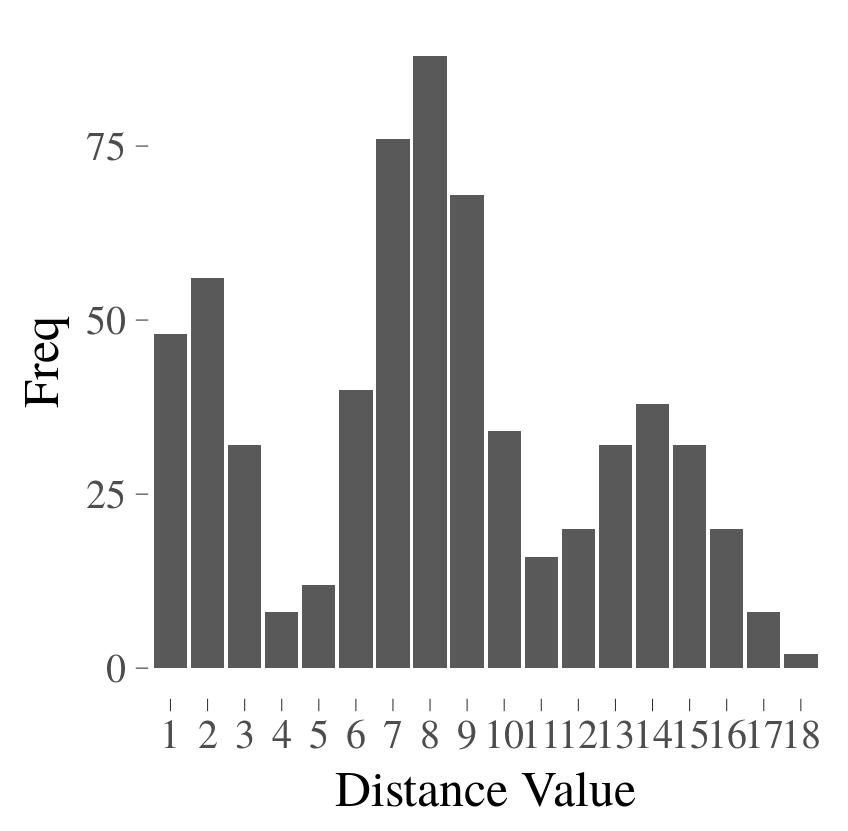}
&
\includegraphics[width=0.27\textwidth]{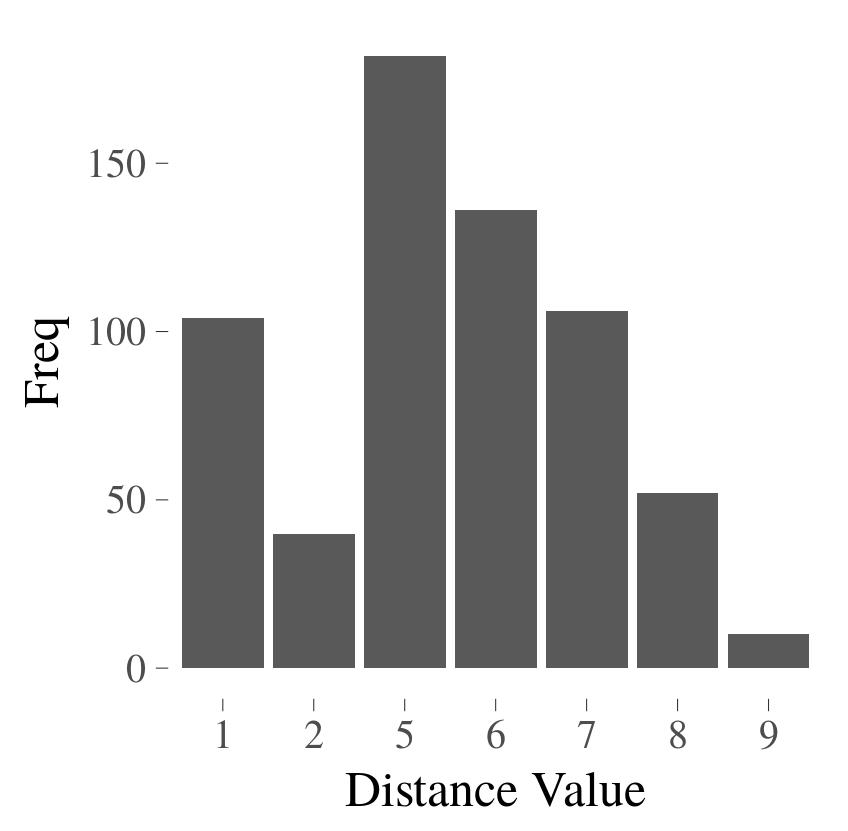}\\
Histogram of original
&
Histogram after one multiplication
\\
\includegraphics[width=0.27\textwidth]{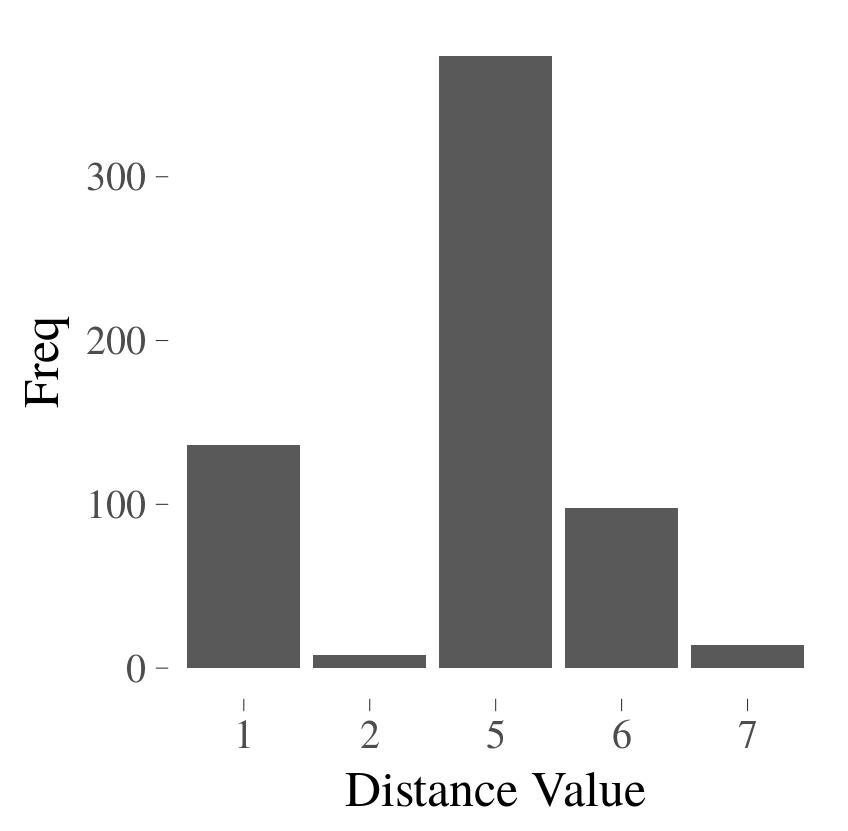}
&
\includegraphics[width=0.27\textwidth]{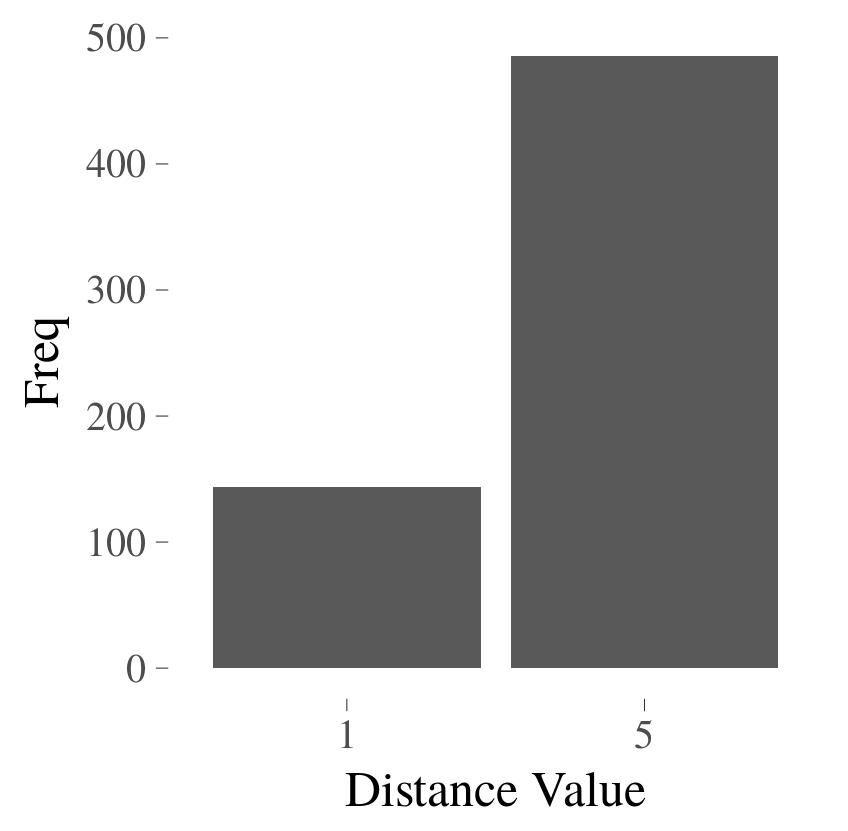}
\\
Histogram after two multiplications
&
Histogram after three multiplications
\end{tabular}
\caption{The process of distance equalization for successive powers of
the incidence matrix. The matrix $A_{D}^{3}$ is ultrametric.}\label{fig:latcc_4dc}
\end{figure*}

By shifting the data points to different locations, we create several
distinct structured clusterings that consists of
rectangular clusters.

Figures~\ref{fig:lat36alla} and~\ref{fig:lat36allb} show an example of
a series of datasets with a total of 36 data points.  Initially, the
data set has 4 rectangular clusters containing 9 data points each with
a gap of 3 distance units between the clusters.
The ultrametricity of the dataset and, therefore,  its clusterability
is affected by the number of
clusters, the size of the clusters, and the inter-cluster distances.
Figure~\ref{fig:lat36allb} shows that $m(A)$ reaches its highest
value  and, therefore, the clusterability is the lowest, when there is only one
cluster in the dataset (see the third row of
Figure~\ref{fig:lat36allb}).

\begin{figure*}
\centering
\begin{tabular}{ccc}
\includegraphics[width=0.27\textwidth]{latcc_4.jpeg}
&
\includegraphics[width=0.27\textwidth]{latcc_4_0.jpeg}
&
\includegraphics[width=0.27\textwidth]{latcc_4_3.jpeg}\\
Lattice with $k=4$ & Histogram for $k=4$ & $m(A_D) = 3, \clust(D) = 12$
\\
\includegraphics[width=0.27\textwidth]{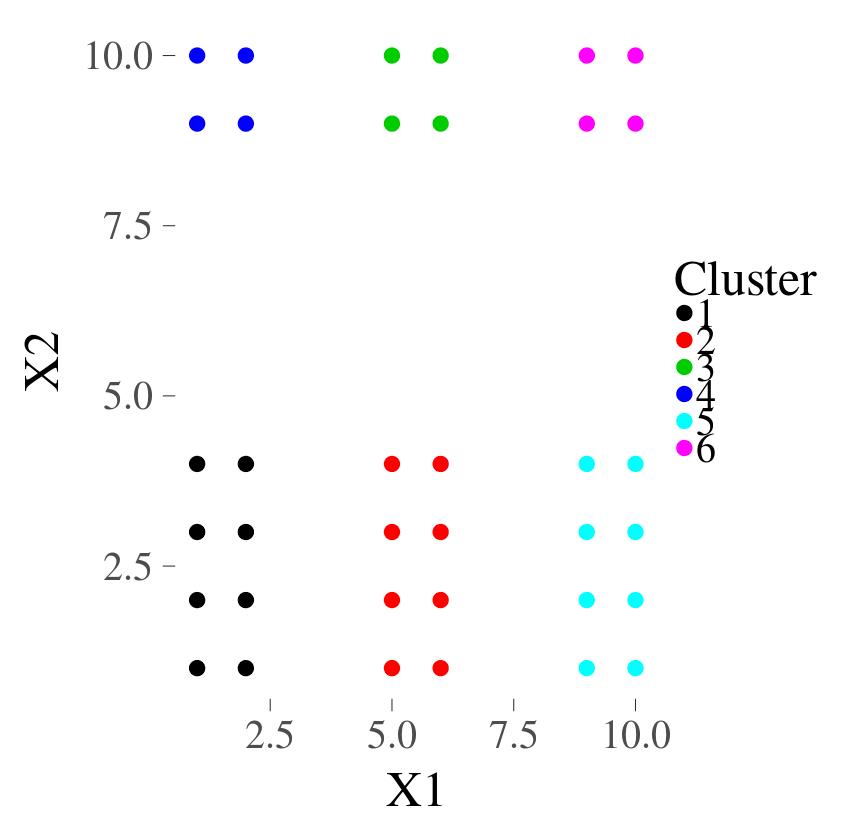}
&
\includegraphics[width=0.27\textwidth]{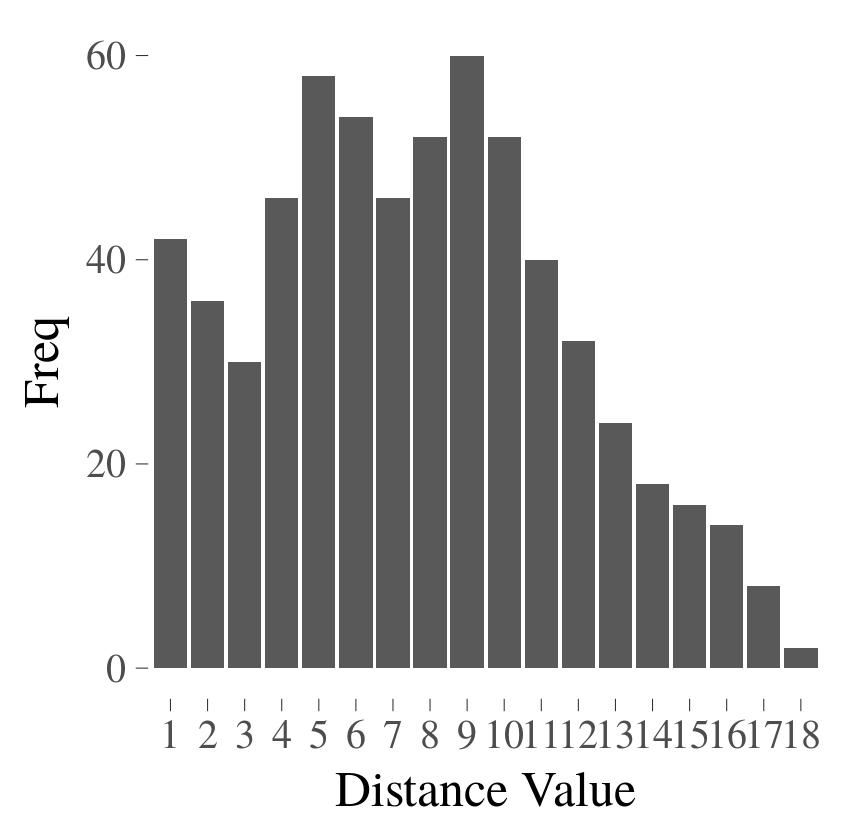}
&
\includegraphics[width=0.27\textwidth]{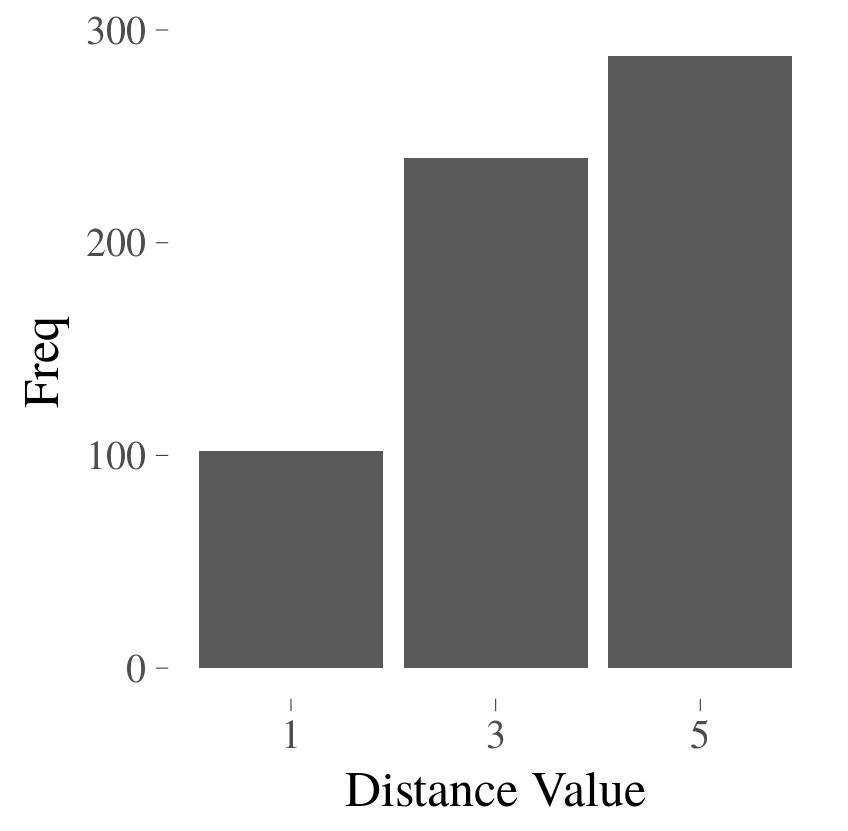}\\
Lattice with $k=6$ & Histogram for $k=6$ & $m(A_D) = 4, \clust(D) = 9$
\\
\includegraphics[width=0.27\textwidth]{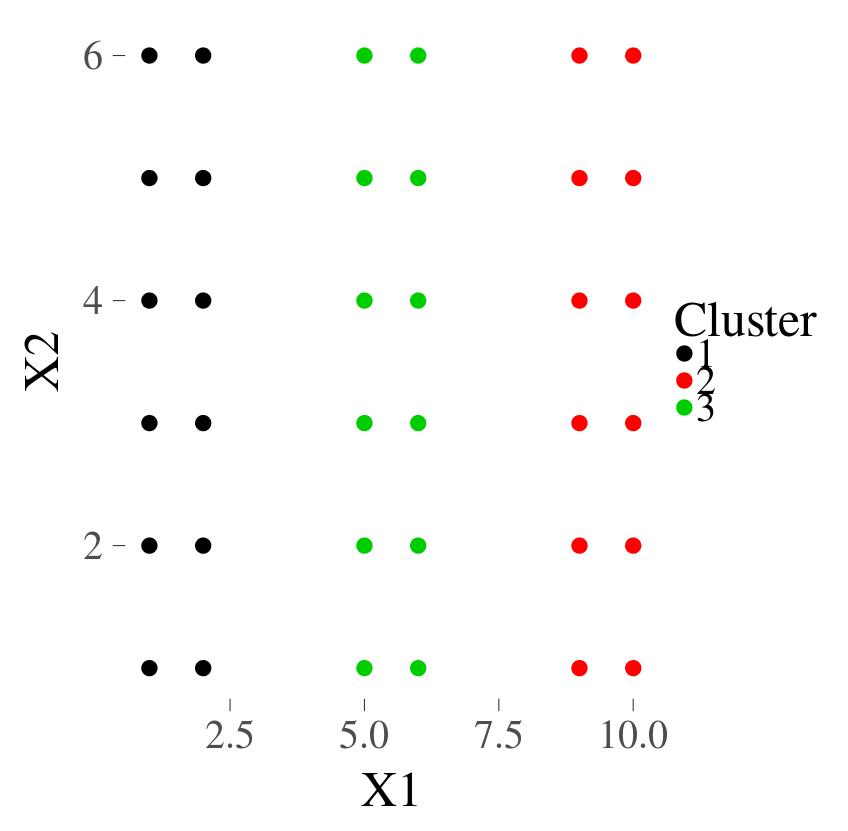}
&
\includegraphics[width=0.27\textwidth]{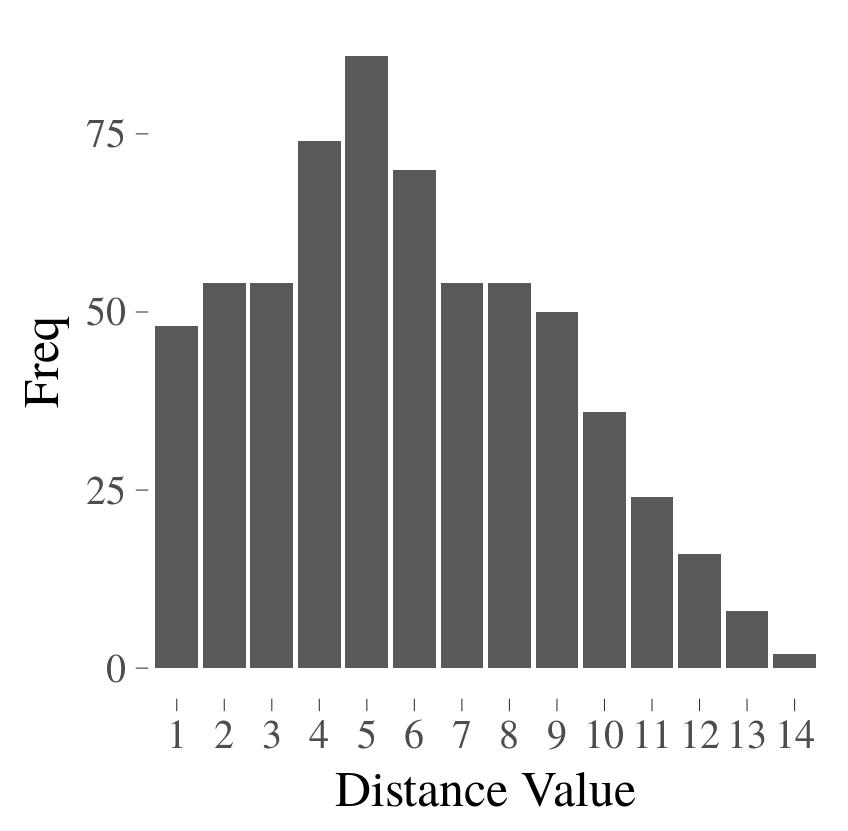}
&
\includegraphics[width=0.27\textwidth]{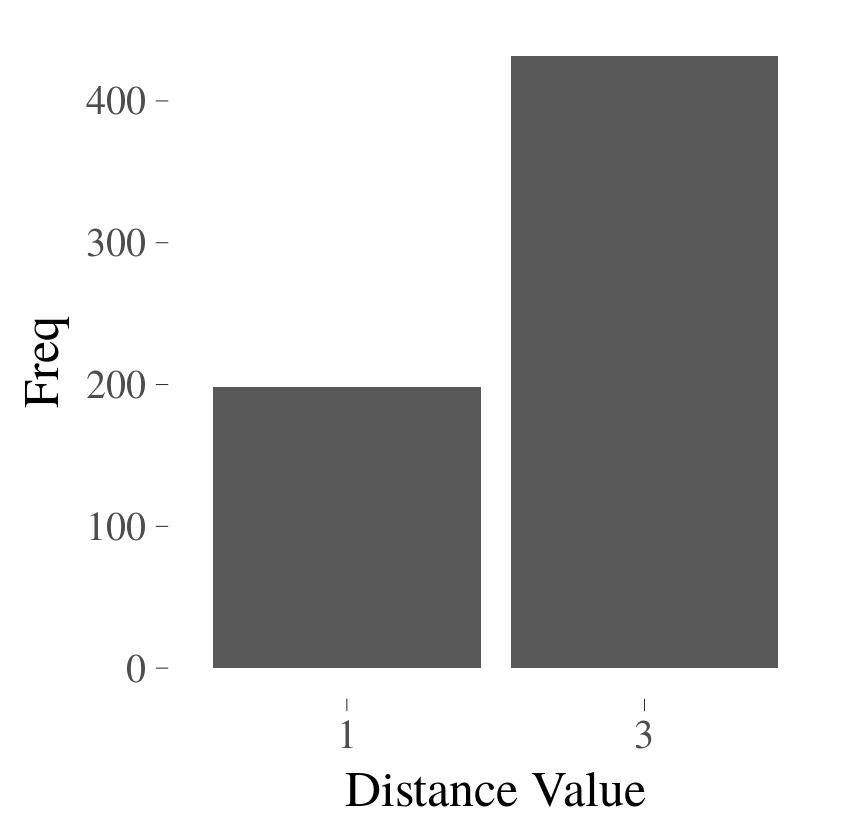}\\
Lattice with $k=3$ & Histogram for $k=3$ & $m(A_D) = 5, \clust(D) = 7.2$
\end{tabular}
\caption{Cluster separation and clusterability.\label{fig:lat36alla}}
\end{figure*}

If points are uniformly distributed, as it is the case in the
third row of Figure~\ref{fig:lat36allb}, the clustering structure
disappears and $\clust(D)$ has the lowest value.

\begin{figure*}
\centering
\begin{tabular}{ccc}
\includegraphics[width=0.27\textwidth]{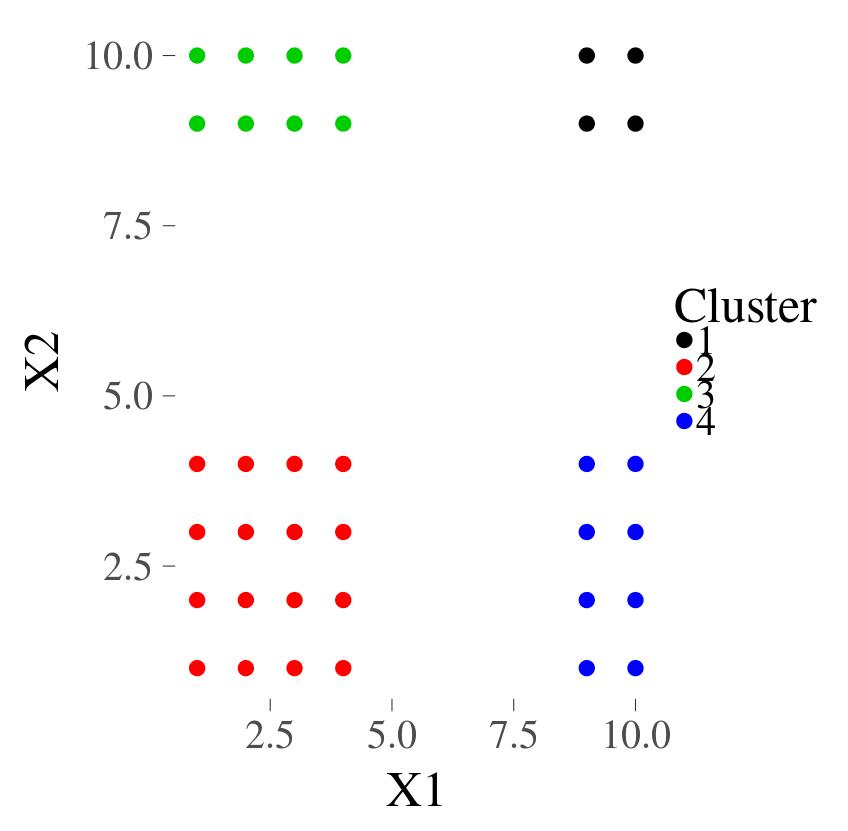}
&
\includegraphics[width=0.27\textwidth]{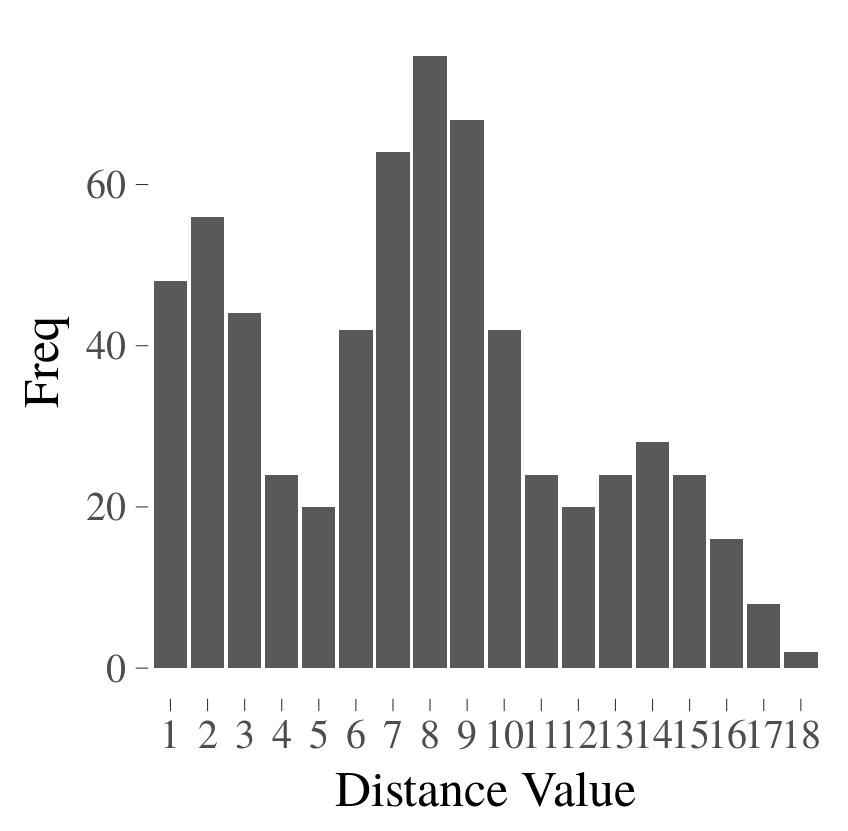}
&
\includegraphics[width=0.27\textwidth]{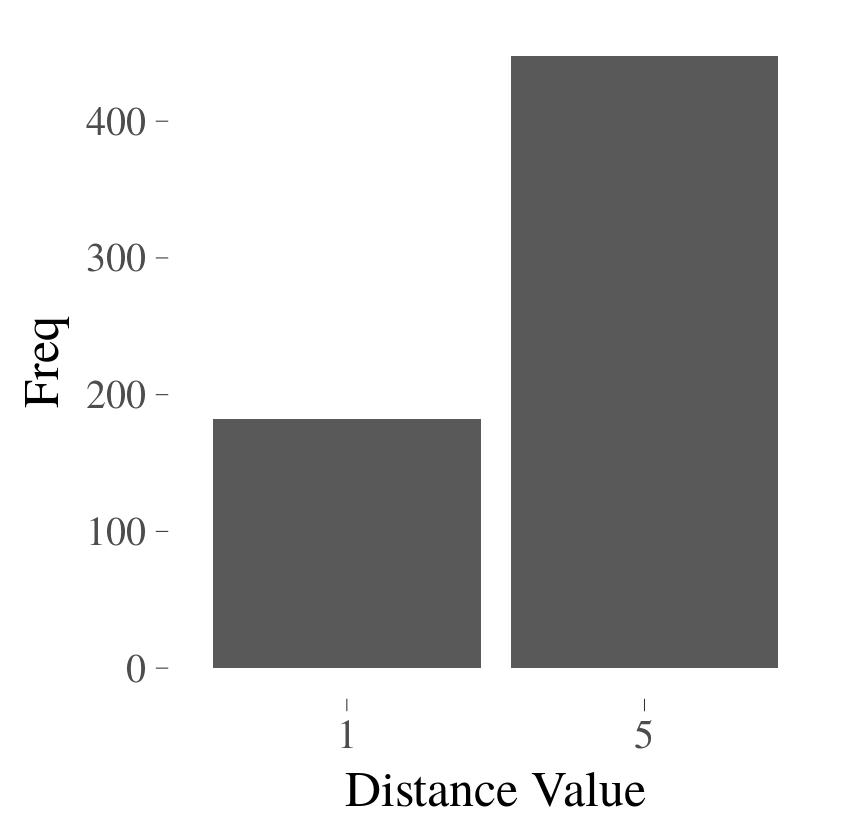}\\
\\
Lattice dataset with $k=4$
&
Histogram for $k=4$
&
$m(A_D) = 5, \clust(D) = 7.2$
\\
\includegraphics[width=0.27\textwidth]{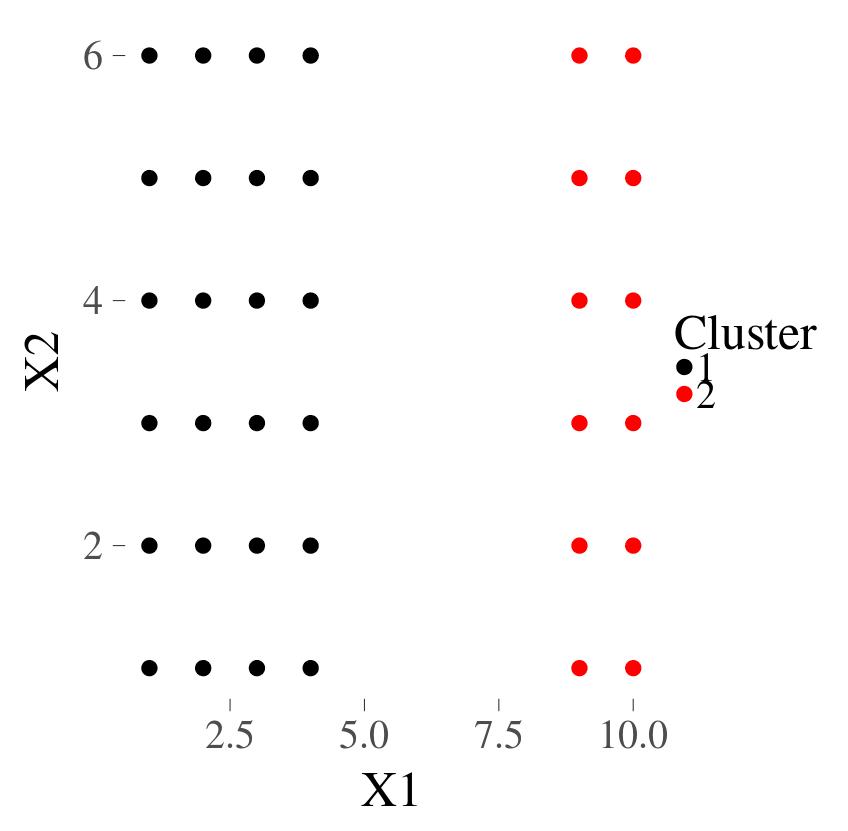}
&
\includegraphics[width=0.27\textwidth]{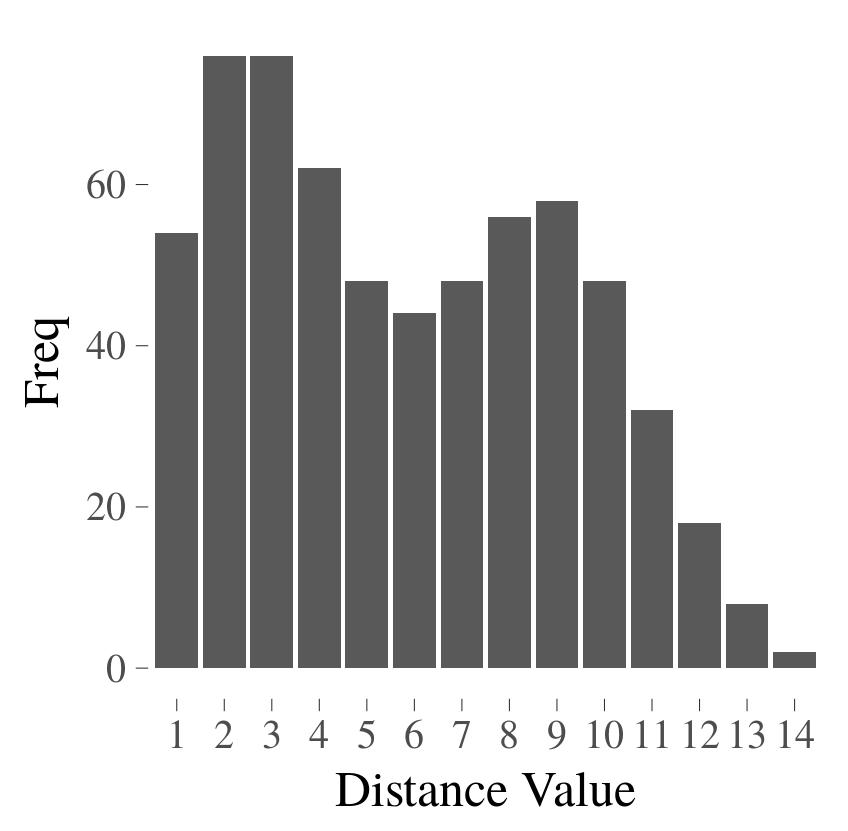}
&
\includegraphics[width=0.27\textwidth]{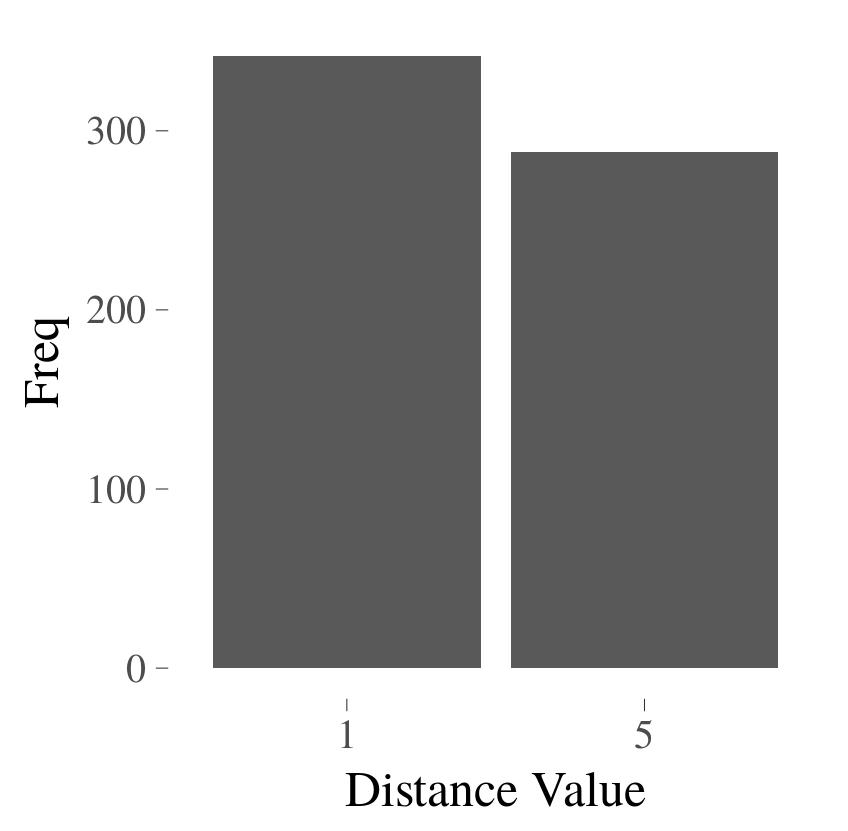}
\\
Lattice dataset with $k=2$
&
Histogram for $k=2$
&
$m(A_D) =7, \clust(D) = 5.1$
\\
\includegraphics[width=0.27\textwidth]{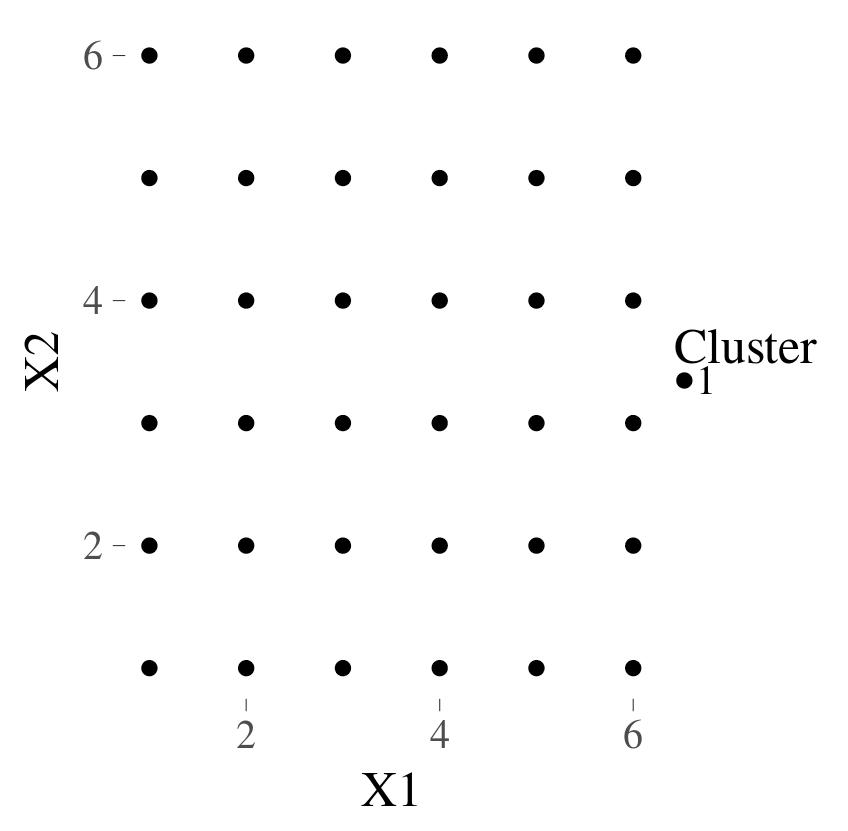}
&
\includegraphics[width=0.27\textwidth]{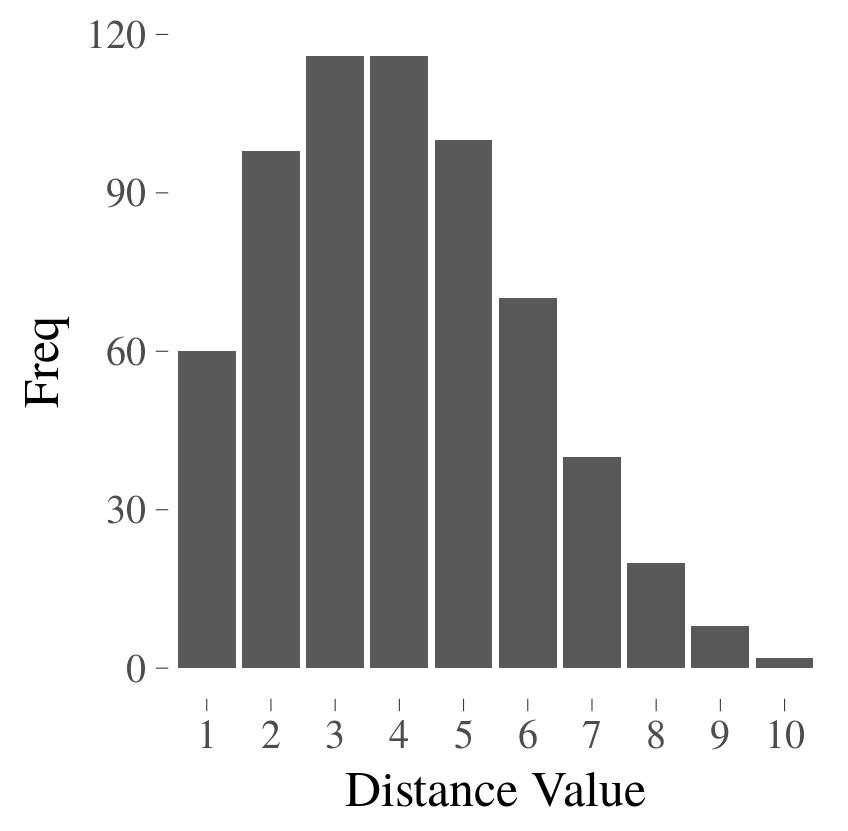}
&
\includegraphics[width=0.27\textwidth]{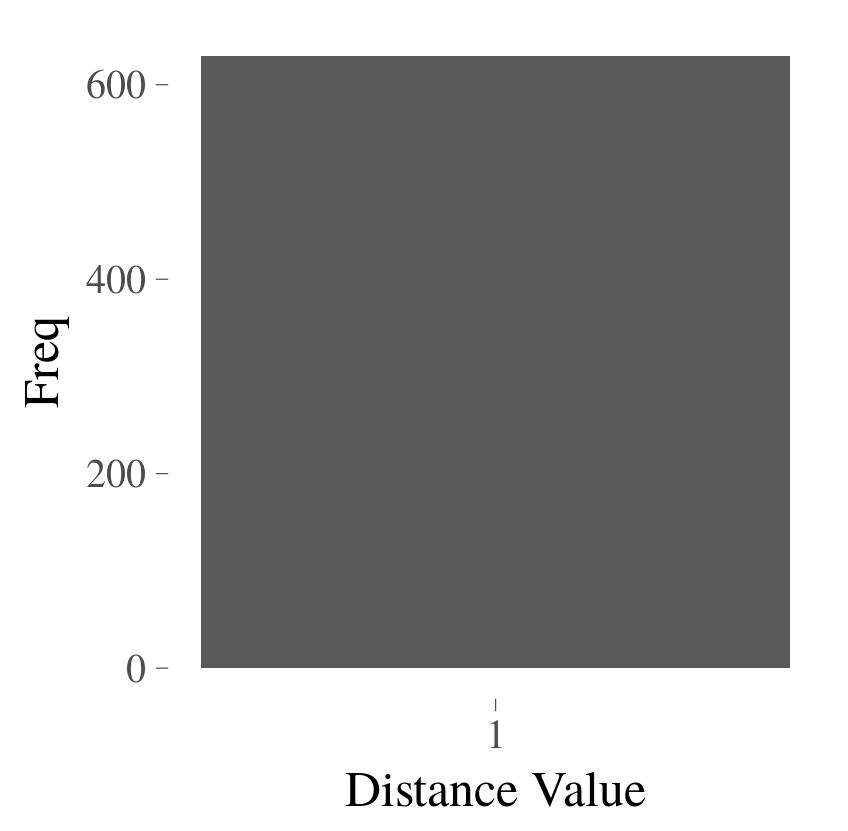}\\
Lattice dataset with $k=1$
&
Histogram for $k=1$
&
$m(A_D) =9, \clust(D) = 4$
\end{tabular}
\caption{Cluster separation and clusterability (continued).
\label{fig:lat36allb}}
\end{figure*}

\begin{figure*}
\centering
\begin{tabular}{ccc}
\includegraphics[width=0.27\textwidth]{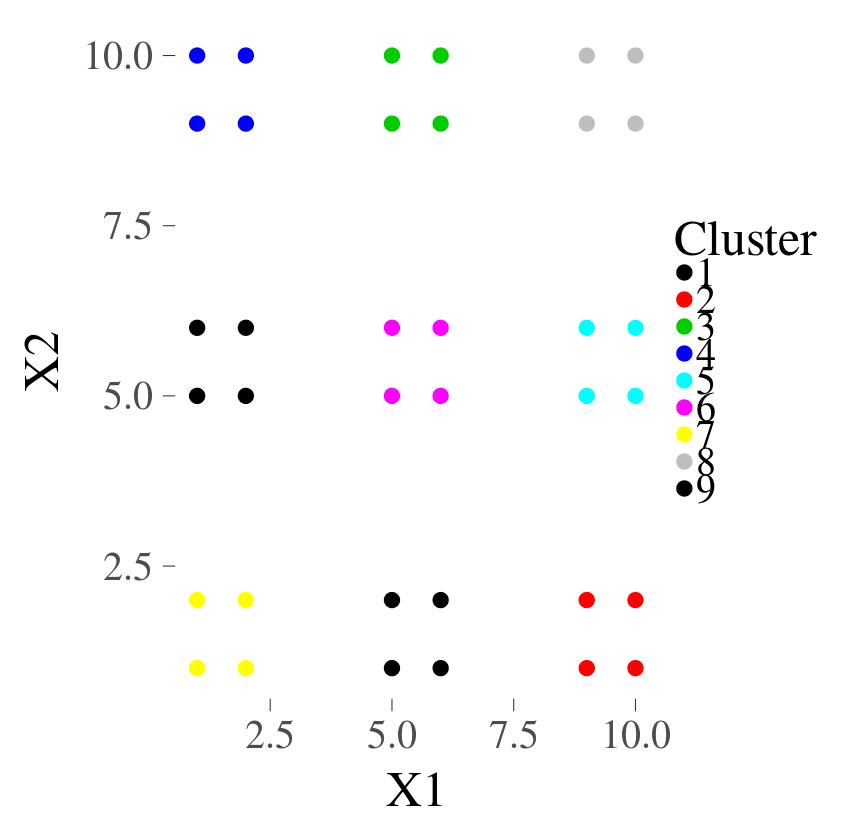}
&
\includegraphics[width=0.27\textwidth]{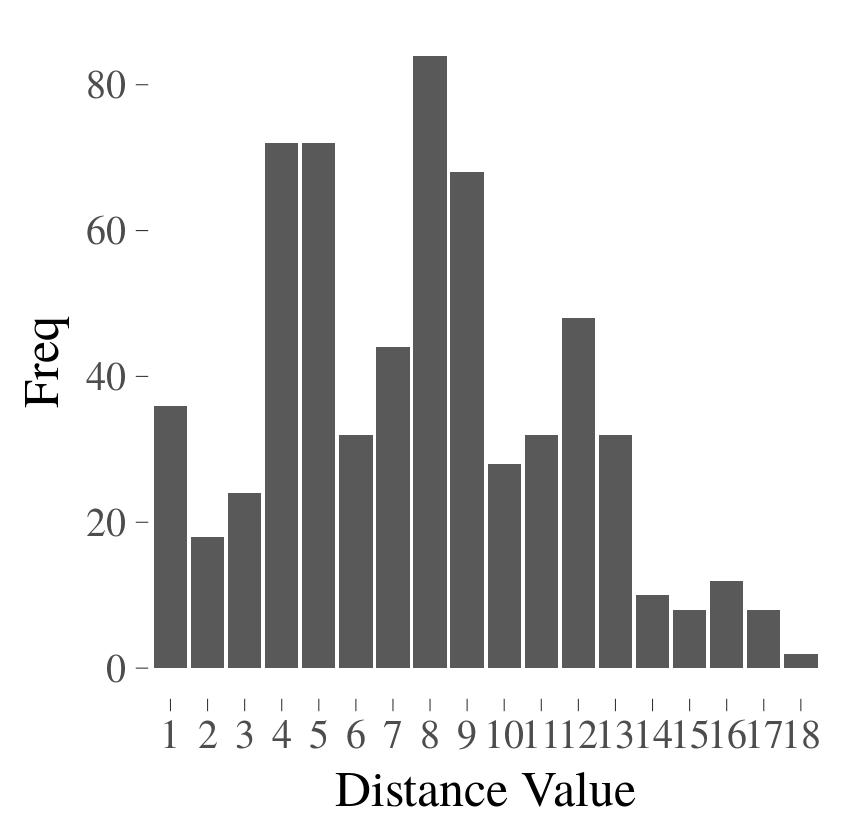}
&
\includegraphics[width=0.27\textwidth]{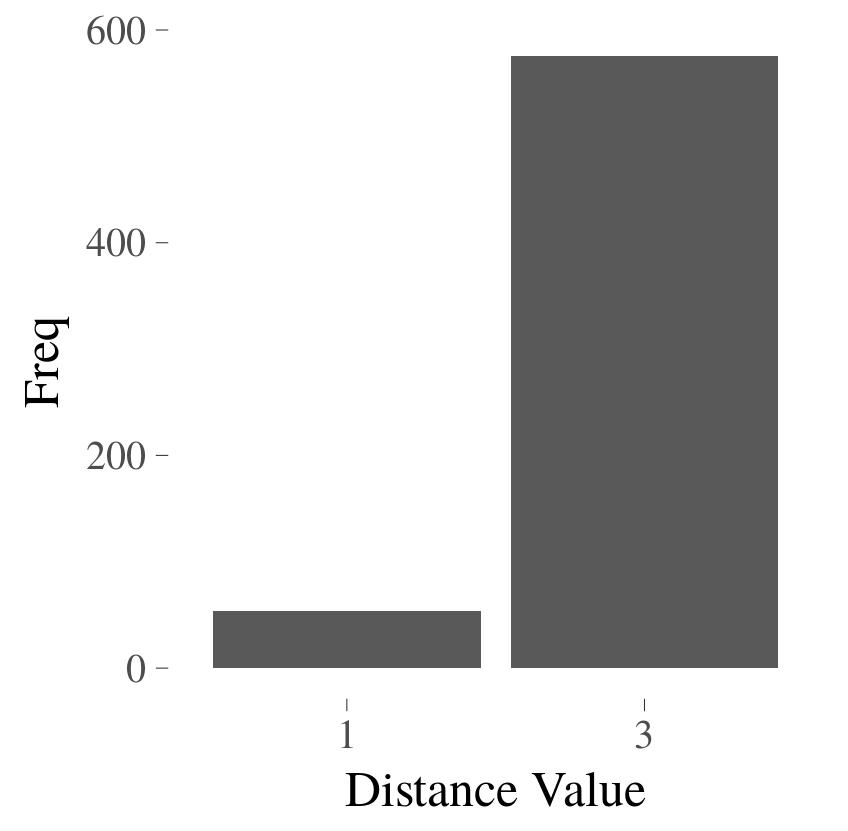}
\\
Lattice dataset with $k=9$
&
$k=9$
&
$m(A_D) =6, \clust(D)=6$
\end{tabular}
\caption{Further examples of data sets and their clusterability.}\label{fig:lat36spc}
\end{figure*}

Histograms are used by some authors~\cite{SBA,ASB3} to identify the
degree of clusterability.  Note however that in the case of the data
shown in Figures~\ref{fig:lat36alla} and~\ref{fig:lat36allb}, the
histograms of original dissimilarity of the dataset do not offer
guidance on the clusterability(second column of
Figure~\ref{fig:lat36alla} and~\ref{fig:lat36allb}).  By applying the
``min-max'' power operation on the original matrix, we get an
ultrametric matrix.  The new histogram of the ultrametric shows a
clear difference on each dataset. In the third column of
Figures~\ref{fig:lat36alla} and~\ref{fig:lat36allb}, the histogram of
the ultrametric matrix for each dataset shows a decrease of the number
of distinct distances after the ``power'' operation.

If the dataset has no clustering structure the histogram of the
ultrametric distance has only one bar.

The number of pics $p$ of the histogram indicate the minimum
number of clusters $k$ in the ultrametric space specified by the
matrix $A^*$ using the equality $\binom{k}{2} = p$, so the number
of clusters is $\left\lceil \frac{1 +
\sqrt{1+8p}}{2}\right\rceil$. The largest $k$ values of valleys of
the histogram indicate the radii of the spheres in the ultrametric
space that define the clusters.

If a data set contains a large number of small clusters, these
clusters can be regarded as outliers and the clusterability of the
data set is reduced.  This is the case in the third line of
Figure~\ref{fig:lat36spc} which shows a particular case for 9 clusters with
36 data points.  Since the size of each cluster is too small to be
considered as a real cluster, all of them together are merely regarded
as a one cluster dataset with 9 points.

\section{Conclusions and Future Work}\label{sec:cfw}
The special matrix powers of the adjacency matrix of the weighted
graph of object dissimilarities provide a tool for computing the
subdominant ultrametric of a dissimilarity and an assessment of the
existence of an underlying clustering structure in a dissimilarity
space.

The ``power'' operation successfully eliminates the redundant
information in the dissimilarity matrix of the dataset but maintains
the useful information that can discriminate the cluster structures of
the dataset.

In a series of seminal papers\cite{MURT04,MURT05,MURT07}, F. Murtagh
argued that as the dimensionality of a linear metric space increases,
an equalization process of distances takes place and the metric of the
space gets increasingly closer to an ultrametric.  This raises the
issues related to the comparative evaluation (statistical and
algebraic) of the ultrametricity of such spaces and of their
clusterability, which we intend to examine in the future.

\section*{References}

\begin{thebibliography}{10}

\bibitem{AAB}
M.~Ackerman, A.~Adolfsson, and N.~Brownstein.
\newblock An effective and efficient approach for clusterability evaluation.
\newblock {\em CoRR}, abs/1602.06687, 2016.

\bibitem{ASB3}
M.~Ackerman and S.~Ben{-}David.
\newblock Clusterability: {A} theoretical study.
\newblock In {\em Proceedings of the Twelfth International Conference on
  Artificial Intelligence and Statistics, {AISTATS} 2009, Clearwater Beach,
  Florida, USA, April 16-18, 2009}, pages 1--8, 2009.

\bibitem{ASLO10}
M.~Ackerman, S.~Ben{-}David, and D.~Loker.
\newblock Towards property-based classification of clustering paradigms.
\newblock In {\em Advances in Neural Information Processing Systems 23: 24th
  Annual Conference on Neural Information Processing Systems 2010. Proceedings
  of a meeting held 6-9 December 2010, Vancouver, British Columbia, Canada.},
  pages 10--18.

\bibitem{MASB}
Margareta Ackerman and Shai Ben{-}David.
\newblock Clusterability: {A} theoretical study.
\newblock In {\em Proceedings of the Twelfth International Conference on
  Artificial Intelligence and Statistics, {AISTATS} 2009, Clearwater Beach,
  Florida, USA, April 16-18, 2009}, pages 1--8, 2009.

\bibitem{ABBL}
Margareta Ackerman, Shai Ben{-}David, Simina Br{\^{a}}nzei, and David Loker.
\newblock Weighted clustering.
\newblock In {\em Proceedings of the Twenty-Sixth {AAAI} Conference on
  Artificial Intelligence, July 22-26, 2012, Toronto, Ontario, Canada.}, pages
  858--863, 2012.

\bibitem{AABR}
Andreas Adolfsson, Margareta Ackerman, and N.~C. Brownstein.
\newblock To cluster, or not to cluster: An analysis of clusterability methods.
\newblock {\em CoRR}, abs/1808.08317, 2018.

\bibitem{BBV}
M.~F. Balcan, A.~Blum, and S.~Vempala.
\newblock A discriminative framework for clustering via similarity functions.
\newblock In {\em Proceedings of the 40th Annual {ACM} Symposium on Theory of
  Computing, Victoria, British Columbia, Canada, May 17-20, 2008}, pages
  671--680, 2008.

\bibitem{SBD}
S.~Ben{-}David.
\newblock Computational feasibility of clustering under clusterability
  assumptions.
\newblock {\em CoRR}, abs/1501.00437.

\bibitem{MASB08}
S.~Ben{-}David and M.~Ackerman.
\newblock Measures of clustering quality: {A} working set of axioms for
  clustering.
\newblock In {\em Advances in Neural Information Processing Systems 21,
  Proceedings of the Twenty-Second Annual Conference on Neural Information
  Processing Systems, Vancouver, British Columbia, Canada, December 8-11,
  2008}, pages 121--128, 2008.

\bibitem{SBA}
S.~Ben{-}David and M.~Ackerman.
\newblock Measures of clustering quality: {A} working set of axioms for
  clustering.
\newblock In {\em Advances in Neural Information Processing Systems 21,
  Proceedings of the Twenty-Second Annual Conference on Neural Information
  Processing Systems, Vancouver, British Columbia, Canada, December 8-11,
  2008}, pages 121--128, 2008.

\bibitem{BHE}
A.~Ben{-}Hur, A.~Elisseeff, and I.~Guyon.
\newblock A stability based method for discovering structure in clustered data.
\newblock In {\em Proceedings of the 7th Pacific Symposium on Biocomputing,
  {PSB} 2002, Lihue, Hawaii, USA, January 3-7, 2002}, pages 6--17, 2002.

\bibitem{DANLS}
Amit Daniely, Nati Linial, and Michael~E. Saks.
\newblock Clustering is difficult only when it does not matter.
\newblock {\em CoRR}, abs/1205.4891, 2012.

\bibitem{ESK}
S.~Epter, M.~Krishnamoorthy, and M.~Zaki.
\newblock Clusterability detection and initial seed selection in large
  datasets.
\newblock In {\em The International Conference on Knowledge Discovery in
  Databases}, volume~7, 1999.

\bibitem{LECL}
B.~Leclerc.
\newblock Description combinatoire des ultram{\'{e}}triques.
\newblock {\em Math{\'{e}}matiques et science humaines}, 73:5--37, 1981.

\bibitem{MURT04}
Fionn Murtagh.
\newblock Quantifying ultrametricity.
\newblock In {\em COMPSTAT}, pages 1561--1568, 2004.

\bibitem{MURT05}
Fionn Murtagh.
\newblock Clustering in very high dimensions.
\newblock In {\em UK Workshop on Computational Intelligence}, page 226, 2005.

\bibitem{MURT07}
Fionn Murtagh.
\newblock Identifying and exploiting ultrametricity.
\newblock In {\em Advances in Data Analysis}, pages 263--272. Springer, 2007.

\bibitem{DSCD}
D.~A. Simovici and C.~Djeraba.
\newblock {\em Mathematical Tools for Data Mining -- Set Theory, Partial
  Orders, Combinatorics}.
\newblock Springer-Verlag, London, second edition, 2008.

\end{thebibliography}

\end{document}